\definecolor{maincolor}{HTML}{00274C}
\tikzset{every picture/.style={font issue=\footnotesize},
    font issue/.style={execute at begin picture={#1\selectfont}}
}
\pgfplotsset{compat=1.7}
\newtheoremstyle{ieeetheorem}
{}                
{}                
{\upshape}        
{}                
{\bfseries}       
{.}               
{ }               
{}                
\theoremstyle{ieeetheorem}
\newtheorem{theorem}{Theorem}
\theoremstyle{definition}
\definecolor{maincolor}{HTML}{032F99}
\definecolor{blue}{RGB}{31,64,122}
\definecolor{red}{HTML}{e05a87} 
\newcommand{\oset}[3][0ex]{%
  \mathrel{\mathop{#3}\limits^{
    \vbox to#1{\kern-2\ex@
    \hbox{$\scriptstyle#2$}\vss}}}}
\newcommand{\mb}[1]{\mathbf{#1}}
\newcommand{\mbg}[1]{\boldsymbol{#1}}
\begin{document}

\title{Constrained Diffusion Models for Synthesizing Representative Power Flow Datasets}

\author{Milad Hoseinpour,~\IEEEmembership{Student Member,~IEEE}, \ Vladimir Dvorkin,~\IEEEmembership{Member,~IEEE}
\thanks{
The authors are with the Department of Electrical Engineering and
Computer Science, University of Michigan, Ann Arbor, MI
48109, USA. E-mail: \{miladh,~dvorkin\}@umich.edu.
}
}



\maketitle

\begin{abstract}
High-quality power flow datasets are essential for training machine learning models in power systems. However, security and privacy concerns restrict access to real-world data, making statistically accurate and physically consistent synthetic datasets a viable alternative. We develop a diffusion model for generating synthetic power flow datasets from real-world power grids that both replicate the statistical properties of the real-world data and ensure AC power flow feasibility. To enforce the constraints, we incorporate gradient guidance based on the power flow constraints to steer diffusion sampling toward feasible samples. For computational efficiency, we further leverage insights from the fast decoupled power flow method and propose a variable decoupling strategy for the training and sampling of the diffusion model. These solutions lead to a physics-informed diffusion model, generating power flow datasets that outperform those from the standard diffusion in terms of feasibility and statistical similarity, as shown in experiments across IEEE benchmark systems. 
\end{abstract}

\begin{IEEEkeywords}
Diffusion model,  generative AI in power systems, physics-informed machine learning, power flow, synthetic data.
\end{IEEEkeywords}

\section{Introduction}
\IEEEPARstart{P}{ower} flow datasets \cite{klamkin2025pglearn,gillioz2025large,VENZKE2021106614} are essential for training and benchmarking machine learning (ML) models for optimal power flow (OPF) \cite{van2021machine} and state estimation \cite{pagnier2021embedding}. However, the real-world power flow datasets are rarely available due to privacy, security, and legal barriers \cite{joswig2022opf,bugaje2023generating,hoseinpour2023privacy, dvorkin2023differentially,wu2025synthesizing}. Recent advances in generative AI, capable of producing synthetic data with distributions similar to the original data \cite{
zhang2024generating,wang2022generating,pan2019data,Wang_2022,larsen2016autoencoding,zhang2018generative,gu2019gan,fekri2019generating,el2020data,chen2018model
}, have partially lifted these barriers, yet statistical consistency alone cannot guarantee adherence to physical grid constraints \cite{
hoseinpour2025domain}. Consequently, ML models trained on constraint-agnostic synthetic datasets are likely to perform substantially worse than those trained on original data. This paper introduces a data generation framework to synthesize statistically consistent and physically meaningful power flow datasets. To achieve this, we develop a constrained diffusion model to learn the underlying distribution of power flow data and generate synthetic samples that are both statistically representative and feasible with respect to the AC power flow constraints. This constrained diffusion model can be trained internally by system operators to publicly release high-quality synthetic power flow data to support a wide range of downstream ML applications.

\subsection{Related Work}
The literature on generating synthetic datasets for power systems broadly falls into two categories: generic random sampling and historical data-driven approaches. 

The former focuses on power flow data generation through iterative uniform sampling of loads followed by solving the OPF problem \cite{singh2021learning,fioretto2020predicting}. 
In \cite{zamzam2020learning}, authors use a truncated Gaussian distribution as another variation of sampling, which also accounts for correlations between power injections at different locations. However, the datasets based on generic sampling only represent a small portion of the feasibility region. To solve this, \cite{joswig2022opf} uniformly samples loads from a convex set, containing the feasible region, and iteratively refines this set using infeasibility certificates. In \cite{nadal2023scalable}, a bilevel optimization is proposed to sample operating conditions close to the boundaries of the feasible region, which is more informative that a random sampling. A basic requirement for ML-based OPF solvers is robustness to grid topology variations, e.g., network topology switching \cite{popli2024robustness}. To meet this requirement, authors in \cite{lovett2024opfdata} incorporate topological perturbations in addition to load perturbations in their synthetic data generation framework. 

Although straightforward, random sampling comes with certain limitations. The resulting datasets do not represent the true underlying distribution of real-world operating conditions. That is, the synthetic data points may fail to capture correlations, patterns, or variability present in historical data. ML-based OPF solvers trained on such data may generalize poorly, leading to inaccurate predictions and erroneous uncertainty quantification \cite{kiyani2025decision,shafer2008tutorial}. Moreover, the required number of random samples to cover the whole feasible region grows exponentially in the size of the grid \cite{bengio2013representation,stiasny2022closing}.

The historical data-driven approaches, instead, learn the underlying data distribution from real operational records. This approach has been enabled by advances in generative models, such as variational autoencoders (VAEs), generative adversarial networks (GANs), and diffusion models. For instance, the VAE from \cite{pan2019data} generates synthetic electric vehicles load profiles, and conditional VAE from \cite{wang2022generating} does the same for snapshots of multi-area electricity demand. Reference \cite{Wang_2022} presents conditional VAE for synthesizing load profiles of industrial and commercial customers, conditioned on time and typical power exchange with the grid. However, VAEs may struggle with complex and high-dimensional datasets and result in low quality samples \cite{larsen2016autoencoding}. Moreover, there is no principled approach for VAEs to control the generated outputs, making it difficult to enforce domain-specific constraints. GANs have also been used to synthesize load patterns in power systems \cite{zhang2018generative,gu2019gan,fekri2019generating,el2020data,chen2018model}. For instance, \cite{el2020data}  proposes a GAN model to generate synthetic appliance-level load patterns and usage habits. In \cite{chen2018model}, authors propose a GAN-based framework for renewable energy scenario generation that effectively captures both temporal and spatial patterns across a large number of correlated resources. Nonetheless, GANs also suffer from issues such as training instability, mode collapse, and the lack of principled means for controllability \cite{jabbar2021survey}. 

Addressing the limitations of GANs and VAEs, diffusion models have emerged as the leading choice for generative models \cite{dhariwal2021diffusion}. A physics-informed diffusion model is proposed in \cite{zhang2024generating} for generating synthetic net load data, where the solar PV system performance model is embedded into the diffusion model. In \cite{dong2023short}, authors propose a conditional latent diffusion model for short-term wind power scenario generation, which uses weather conditions as inputs. Authors in \cite{li2024diffcharge} developed a framework based on diffusion models to generate electric vehicle charging demand time-series data, which is also capable of capturing temporal correlation between charging stations.

While this line of work advocates for diffusion models to generate power systems data, it primarily focuses on statistical consistency. To the best of our knowledge, no prior work has explored the integration of domain constraints such as the AC power flow constraints directly into the diffusion process.  

\subsection{Summary of Contributions}
The main contribution of this paper is a generative AI framework that leverages power systems operational data to synthesize credible power flow datasets for ML applications. Specific contributions are summarized as follows:
\begin{enumerate}
    \item We develop a diffusion model capable of generating high-quality power flow datasets that inherit the statistical properties of the actual power flow records. Unlike random sampling in \cite{singh2021learning,fioretto2020predicting,zamzam2020learning,joswig2022opf,nadal2023scalable,popli2024robustness,lovett2024opfdata}, the model does not require any distributional assumptions; rather, as a generative model, it learns the distribution of power flow data directly from historical records. Yet, unlike the existing generative models in \cite{zhang2024generating,pan2019data,wang2022generating,Wang_2022,larsen2016autoencoding,zhang2018generative,gu2019gan,fekri2019generating,el2020data,chen2018model}, it controls the output to ensure compliance of synthetic samples with the grid physics. 
    Our model focuses on synthesizing power injections and voltage variables---the data to support OPF, state estimation, and other applications of ML to power systems optimization and control.  
    \item We introduce a guidance term within the sampling phase to ensure that the synthesized data points are feasible with respect to the AC power flow constraints. The guidance term corresponds to a single iteration of Riemannian gradient descent on the clean data manifold---the space of all physically valid power flow states. We formally prove that this approach improves physical consistency without pushing samples off the learned distribution. As a result, the generated power flow data points are both feasible and statistically representative.
    
    \item We leverage power systems domain knowledge for implementing the diffusion model. Inspired by the classical fast decoupled power flow method, we decouple the full variable vector and use two smaller denoiser neural networks to improve scalability. We then propose a custom normalization of the AC power flow equations for stabilizing the sampling of power flow variables. 
\end{enumerate}

The remainder is organized as follows. The problem statement is presented in Sec. \ref{problem_setup}, followed by Sec. \ref{Preliminaries} with preliminaries on diffusion models and power flows. Section \ref{gradient_guidance} introduces the proposed manifold-constrained guidance for enforcing the power flow constraints in diffusion sampling. Then, Sec. \ref{implementation_strategies} provides implementation insights tailored to power systems: variable decoupling and normalization for scale-consistent gradient guidance. Section \ref{results} provides numerical results on the standard IEEE test cases. Section \ref{conclusion} concludes. 

\section{Problem Statement}\label{problem_setup}

Consider a power grid characterized by vectors of active power injections $\mb{p}$, reactive power injections $\mb{q}$, voltage magnitudes $\mb{v}$, and phase angles $\mbg{\theta}$. Given a historical dataset $\mathcal{D}=\{(\mb{p}_{i},\mb{q}_{i},\mb{v}_{i},\mbg{\theta}_{i})\}_{i=1}^{N}$ with $N$ power flow records, our goal is to generate a synthetic dataset $\widetilde{\mathcal{D}}=\{(\tilde{\mb{p}}_{i},\tilde{\mb{q}}_{i},\tilde{\mb{v}}_{i},\tilde{\mbg{\theta}}_{i})\}_{i=1}^{M}$ with $M$ records, which are statistically representative of the given dataset $\mathcal{D}$ and are feasible with respect to the power flow constraints, i.e., data should satisfy the following conditions:
\begin{subequations}
\begin{align}
    &\min\textbf{dist}\left(p_{\text{syn}}~||~p_{\text{real}}\right),
    &&\label{statistical_distance}\\
    &\mathcal{G}(\tilde{\mb{p}}_{i},\tilde{\mb{q}}_{i},\tilde{\mb{v}}_{i},\tilde{\mbg{\theta}}_{i})\leq 0,\quad \forall i=1,\dots,M,&&\label{inequality_constarints}\\
    &\mathcal{H}(\tilde{\mb{p}}_{i},\tilde{\mb{q}}_{i},\tilde{\mb{v}}_{i},\tilde{\mbg{\theta}}_{i})=0,\quad\forall i=1,\dots,M. &&\label{equality_constarints}
\end{align}
\end{subequations}
where the first condition \eqref{statistical_distance} requires minimizing the statistical distance between the real $p_{\text{real}}$ and synthetic $p_{\text{syn}}$ probability distributions, measured by $\textbf{dist}(\cdot||\cdot)$ (e.g., Wasserstein distance). The second condition \eqref{inequality_constarints} requires the synthetic records to satisfy the inequality constraints  $\mathcal{G}$, including the injection and voltage limits. The last condition \eqref{equality_constarints} requires satisfaction of the power flow equality constraints $\mathcal{H}$. 

Figure~\ref{fig:problem_setup} gives a high-level view of the problem setup. We first train a diffusion model based on $\mathcal{D}$ to learn the underlying probability distribution $p_{\text{real}}$. Then, we sample from the learned distribution $p_{\text{syn}}$ to build a synthetic dataset $\widetilde{\mathcal{D}}$. To ensure the feasibility of the synthetic samples, we guide the sampling process using the power flow constraints.
\begin{figure}
    \centering
    \hspace*{-0.5cm} 
    \includegraphics[width=1.05\linewidth]{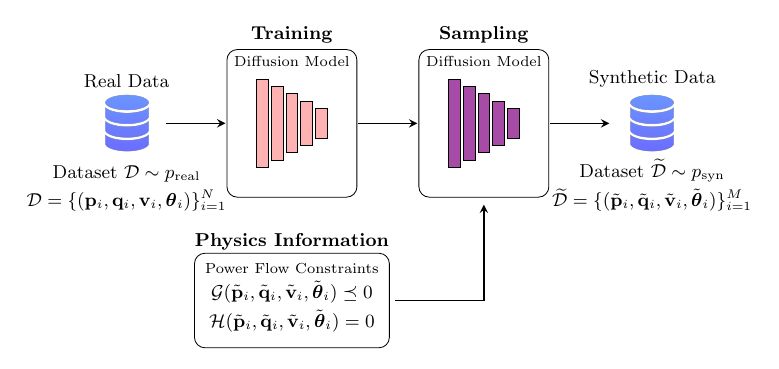}
    \caption{A high-level view of the diffusion model for synthesizing power flow datasets: The training phase (left) uses the actual power flow data $\mathcal{D}$ to learn the real data distribution $p_{\text{real}}$ using a neural network; the sampling phase (right) uses the trained neural network to generate synthetic power flow samples $\widetilde{\mathcal{D}}$; the integration of the power flow constraints (bottom) within the sampling phase ensures that generated samples are physically meaningful.}
    \label{fig:problem_setup}
\end{figure}

\section{Preliminaries} \label{Preliminaries}

This section presents preliminaries on diffusion models and power flow modeling; readers familiar with both topics are invited to proceed to the next section.

\subsection{Diffusion Models}

Diffusion models are generative models that synthesize new data through a two-stage process: forward and reverse. Consider $\mb{x}_0=(\mb{p},\mb{q},\mb{v},\mbg{\theta})$ as a real power flow data point from the underlying distribution of the real data $p_{\text{real}}=q_{0}$. The forward process is a Markov chain that incrementally adds Gaussian noise to a real data point $\mathbf{x}_0\sim q_0$ and transforms it into pure Gaussian noise through a fixed sequence of steps. For each time step $\{t\}_{t=1}^{T}$, the diffusion transition kernel is 
\begin{equation}
q(\mathbf{x}_t \mid \mathbf{x}_{t-1}) = \mathcal{N}(\mathbf{x}_t; \sqrt{1 - \beta_t}\,\mathbf{x}_{t-1}, \beta_t \mathbf{I}),
\label{transition_kernel}
\end{equation}
where \( \beta_t \) is a small positive constant controlling the amount of noise added at each step \cite{ho2020denoising}. From (\ref{transition_kernel}), we directly obtain $\mathbf{x}_t$ from $\mathbf{x}_0$ using
\begin{equation}
    \mathbf{x}_t=\sqrt{\bar{\alpha}_t}\mathbf{x}_0+\sqrt{1 - \bar{\alpha}_t}\mbg{\epsilon},
\end{equation}
where $\alpha_t = 1 - \beta_t$ and $\bar{\alpha}_t = \prod_{s=1}^t \alpha_s$.

The reverse process aims to recover the underlying data distribution $q_0$ from the tractable noise distribution $q_T$. It is modeled as a parameterized Markov chain:
\begin{equation}
p_\theta(\mathbf{x}_{t-1} \mid \mathbf{x}_t) = \mathcal{N}(\mathbf{x}_{t-1}; \mbg{\mu}_\theta(\mathbf{x}_t, t), \mbg{\Sigma}_\theta(\mathbf{x}_t, t)),
\label{reverse_process}
\end{equation}
with the mean $\mbg{\mu}_\theta(\cdot, t)$ and covariance $\mbg{\Sigma}_\theta(\cdot, t)$ functions learned using neural networks parametrized by $\theta$  \cite{ho2020denoising}. 

To train the neural network, we select the loss function as a mean-squared error between the actual noise $\mbg{\epsilon}$ added during the forward process and the noise $\mbg{\epsilon}_\theta(\cdot,t)$ predicted by the neural network:
\begin{equation}
\mathcal{L}_{\text{diff}} = \mathbb{E}_{\mb{x}_0, \epsilon, t} \left\|\mbg{\epsilon} - \mbg{\epsilon}_\theta(\sqrt{\bar{\alpha}_t}\,\mb{x}_0 + \sqrt{1 - \bar{\alpha}_t}\,\mbg{\epsilon}, t) \right\|^2,
\label{loss_function}
\end{equation}
with \( \mbg{\epsilon} \sim \mathcal{N}(0, \mathbf{I}) \) and \( \bar{\alpha}_t \) as defined before. Algorithm~\ref{alg1} summarizes the implementation of the training process \cite{ho2020denoising}.

Once the neural network is trained, new data points are generated using the predicted clean sample $\mb{\hat{x}}_0$ at each step $t$ via Tweedie’s formula:
\begin{equation}
\hat{\mathbf{x}}_0(\mathbf{x}_t, t) = \frac{1}{\sqrt{\bar{\alpha}_t}} \left( \mathbf{x}_t - \sqrt{1 - \bar{\alpha}_t} \, \mbg{\epsilon}_\theta(\mathbf{x}_t, t) \right),
\label{sampling_1}
\end{equation}
and then
\begin{equation} 
\mathbf{x}_{t-1}=\frac{\sqrt{\alpha_t}(1 - \bar{\alpha}_{t-1})}{1 - \bar{\alpha}_t} \mathbf{x}_t + \frac{\sqrt{\bar{\alpha}_{t-1}}\beta_t}{1 - \bar{\alpha}_t} \hat{\mathbf{x}}_0 + {\sigma}_t \mb{z}, \label{sampling_2}
\end{equation}
where $\mb{z}\sim\mathcal{N}(0, \mathbf{I})$, $\sigma_{t}=\beta_{t}\left(1 - \bar{\alpha}_{t-1}\right)/\left(1 - \bar{\alpha}_{t}\right)$, and $t$ ranges from $T$ (starting with the pure Gaussian noise) to 1 (generated sample). Algorithm~\ref{alg2} summarizes the implementation of the sampling process \cite{ho2020denoising}, which returns the synthetic sample $\tilde{\mathbf{x}}_0$ statistically consistent with the original sample $\mb{x}_{0}$. 
\begin{algorithm}[tb]
\caption{:~Training the diffusion model}
\label{alg1}
\textbf{Inputs}: initialized neural network $\mbg{\epsilon}_{\theta}$, noise schedule $\{\alpha_{t}\}_{t=1}^{T}$, dataset of $\mathbf{x}_0$'s sampled from $q_0$\\
\textbf{Outputs}: trained neural network $\mbg{\epsilon}_{\theta}$
\begin{algorithmic}[1]
\Repeat

    \State $\mathbf{x}_0 \sim q_0(\mathbf{x}_0)$
    \State $t \sim \text{Uniform}(\{1, \dots, T\})$
    \State ${\epsilon} \sim \mathcal{N}(0, \mathbf{I})$
    \State Take gradient descent step on
    \[
    \nabla_\theta \left\| {\mbg{\epsilon}} - \mbg{\epsilon}_\theta\left( \sqrt{\bar{\alpha}_t} \mathbf{x}_0 + \sqrt{1 - \bar{\alpha}_t} {\mbg{\epsilon}}, t \right) \right\|^2
    \]
\Until{converged}
\end{algorithmic}
\end{algorithm}
\vspace{0.5em}
\begin{algorithm}[tb]
\caption{:~Sampling new data points}
\label{alg2}
\textbf{Inputs}: trained neural network $\mbg{\epsilon}_{\theta}$, noise schedule $\{\alpha_{t}\}_{t=1}^{T}$, noise scale $\sigma_t$\\
\textbf{Outputs}: new data point $\tilde{\mathbf{x}}_0$
\begin{algorithmic}[1]
\State $\mathbf{x}_T \sim \mathcal{N}(0, \mathbf{I})$
\For{$t = T, \dots, 1$}
    \State $\hat{\mathbf{x}}_0 \gets \frac{1}{\sqrt{\bar{\alpha}_t}} \left( \mathbf{x}_t - \sqrt{1 - \bar{\alpha}_t} \mbg{\epsilon}_\theta(\mathbf{x}_t, t) \right)$
    \State $\mathbf{z} \sim \mathcal{N}(0, \mathbf{I})$ if $t > 1$, else $\mathbf{z} = 0$
    \State $\mathbf{x}_{t-1}\gets\frac{\sqrt{\alpha_t}(1 - \bar{\alpha}_{t-1})}{1 - \bar{\alpha}_t} \mathbf{x}_t + \frac{\sqrt{\bar{\alpha}_{t-1}} \beta_t}{1 - \bar{\alpha}_t} \hat{\mathbf{x}}_0 + {\sigma}_t \mb{z}$
\EndFor
\State \textbf{return} $\tilde{\mathbf{x}}_0$
\end{algorithmic}
\end{algorithm}

\subsection{Power Flow Constraints}
Figure \ref{network_model} illustrates the grid topology and notation used throughout this section. Let $\mathcal{B}=\{1,\cdots,B\}$ denote the set of buses and $\mathcal{L}=\{1,\cdots,L\}$ denote the set of transmission lines in a power grid. Moreover, let elements of power injection vectors $\mb{p}$ and $\mb{q}$ be indexed as $p_{b}$ and $q_{b}$, and let elements of voltage vectors $\mb{v}$ and $\mbg{\theta}$ be indexed as $v_b$ and $\theta_b$, $\forall b \in \mathcal{B}$. In the interest of presentation, we omit shunt admittances in the formulation, though they are included in our numerical results.
\begin{figure}
    \centering
\resizebox{0.5\textwidth}{!}{
   \begin{tikzpicture}
    \draw[black,ultra thick] (0, 0) node[left]{{\scriptsize$v_i\angle\theta_i$}} -- (1, 0) ;
     \draw[black,thick](0.75,0)--(0.75,-0.25);\draw[black,thick](0.75,-0.375) circle (0.125);\draw[black,->,>=stealth] (1.05, -0.45) -- node[pos=0.25,right] {\scriptsize$p_i$} (1.05,-0.05);
    \draw[black,->,>=stealth,thick](0.25,0)--(0.25,-0.5);
    \draw[black,thick] (0.25, 0)  -- (0.25, 0.5) -- node[midway,rotate=45,draw=black,fill=white,minimum width=7.5mm,minimum height=2mm,anchor=center]{} (6*0.25, 7*0.25);\draw[black,thick,dashed](6*0.25, 7*0.25)--(7*0.25, 8*0.25);
    \draw[black,thick] (0.75, 0)  -- (0.75, 0.5) -- node[draw=black,fill=white,minimum width=7.5mm,minimum height=2mm]{} node[below,yshift=-0.15cm] {\scriptsize$g_l,b_l$} 
    (5.25, 0.5)  -- (5.25, 0);
    \draw[black,->,>=stealth] (1, 0.65) -- node[pos=0.5,above] {\scriptsize$f_{l,i\rightarrow j}^{p}$} (1.75,0.65);
    \draw[black,<-,>=stealth] (4.25, 0.65) -- node[pos=0.5,above] {\scriptsize$f_{l,j\rightarrow i}^{p}$} (5,0.65);
    \draw[black,ultra thick] (5, 0)  -- (6, 0) node[right]{\scriptsize$v_j\angle\theta_j$};
    \draw[black,thick](5.25,0)--(5.25,-0.25);\draw[black,thick](5.25,-0.375) circle (0.125);\draw[black,->,>=stealth] (4.95, -0.45) -- node[pos=0.25,left] {\scriptsize$p_j$} (4.95,-0.05);
    \draw[black,->,>=stealth,thick](5.75,0)--(5.75,-0.5);
    \draw[black,thick] (5.75, 0)  -- (5.75, 0.5) -- node[midway,rotate=-45,draw=black,fill=white,minimum width=7.5mm,minimum height=2mm,anchor=center]{} (4.5, 7*0.25);\draw[black,thick,dashed](4.25, 8*0.25)--(4.5, 7*0.25);
    \end{tikzpicture}
}
    \caption{Schematic diagram of the power grid.}
    \label{network_model}
\end{figure}
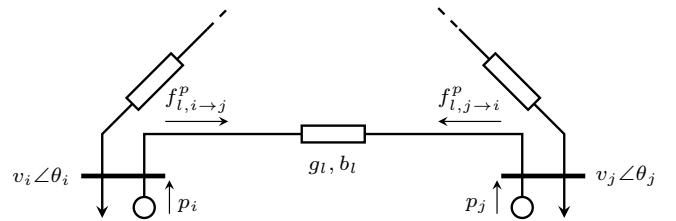
\subsubsection{Power Flow Equality Constraints}
 For each bus $b\in\mathcal{B}$, power flow equality constraints can be represented as
\begin{subequations}
\label{detACOPF1}
\begin{align}    
&p_{b}-\sum_{l \in \mathcal{L}: i=b} f^p_{l, i\to j}-\sum_{l \in \mathcal{L}: j=b} f^p_{l, j\to i}=0, &&\label{ac_powerbal}  \\
&q_{b}- \sum_{l \in \mathcal{L}: i=b} f^q_{l, i\to j} - \sum_{l \in \mathcal{L}: j=b} f^q_{l, j\to i}=0, &&\label{reac_powerbal}  
\end{align}
\end{subequations}
where constraints \eqref{ac_powerbal} and \eqref{reac_powerbal} enforce the active and reactive nodal power balance.
The explicit expression for active power flows $f^{p}_{l,i\to j}$ and reactive power flows $f^{q}_{l,i\to j}$ on each transmission line $l \in \mathcal{L}$ from node $i$ to node $j$ are given by:
\begin{subequations}
\label{ac_pf_eq1}
\begin{align}
f^p_{l,i\to j} =  v_i v_j \big[g_{l} \cos (\theta_i - \theta_j) + b_{l}  \sin (\theta_i - \theta_j)\big],    \label{ac_pf_eq1p}\\
f^q_{l,i\to j}= v_i v_j \big[g_{l} \sin(\theta_i - \theta_j) - b_{l} \cos (\theta_i - \theta_j)\big], \label{ac_pf_eq1q}
\end{align}
\end{subequations}
where $g_{l}=G_{ij}$ and $b_{l}=B_{ij}$ are the real and imaginary parts of the grid admittance matrix $Y=G+jB$. Note that due to line power losses, $f^{p}_{l,i\to j} \neq f^{p}_{l,j\to i}$ and $f^{q}_{l,i\to j} \neq f^{q}_{l,j\to i}$ \cite{molzahn2019survey}. 
\subsubsection{Power Flow Inequality Constraints}
Power flow inequality constraints can be represented as follows:
\begin{subequations}
\label{detACOPF3}
\begin{align}    
&p_{b}^{min} \leq p_{b} \leq p_{b}^{max},\quad\forall {b\in\mathcal{B}}, \label{ac_p}\\
&q_{b}^{min} \leq q_{b} \leq q_{b}^{max},\quad\forall {b\in\mathcal{B}}, \label{ac_q}\\
&v_{b}^{min} \leq v_{b} \leq v_{b}^{max},\quad\forall {b\in\mathcal{B}},\label{ac_v}\\
&(f^p_{l,i\to j})^2+ (f^q_{l,i\to j})^2 \leq(s_{l}^{max})^2,\quad\forall {l\in\mathcal{L}}.\label{ac_s} 
\end{align}
\end{subequations}
where constraints \eqref{ac_p} and \eqref{ac_q} impose limits on the active and reactive power injections, and constraints ~\eqref{ac_v} and \eqref{ac_s} do the same for nodal voltages and apparent power flows.

\section{Diffusion Guidance \\ based on Power Flow Constraints}\label{gradient_guidance}

In theory, a diffusion model trained on feasible power flow data should satisfy constraints \eqref{detACOPF1}--\eqref{detACOPF3}, as they are implicitly encoded in the training dataset. However, in practice, the training and sampling errors may lead to a different outcome \cite{feng2024neural,daras2023consistent}. Although these errors enables the generative power of diffusion models to synthesize new yet statistically consistent samples, the generated power flow samples may not be feasible. In this section, we propose a guidance term for diffusion sampling that preserves the statistical properties of the learned distribution while steering the sampling trajectory toward physically meaningful power flow samples. 

\begin{figure*}[t]
    \centering
    \includegraphics[width=\textwidth]{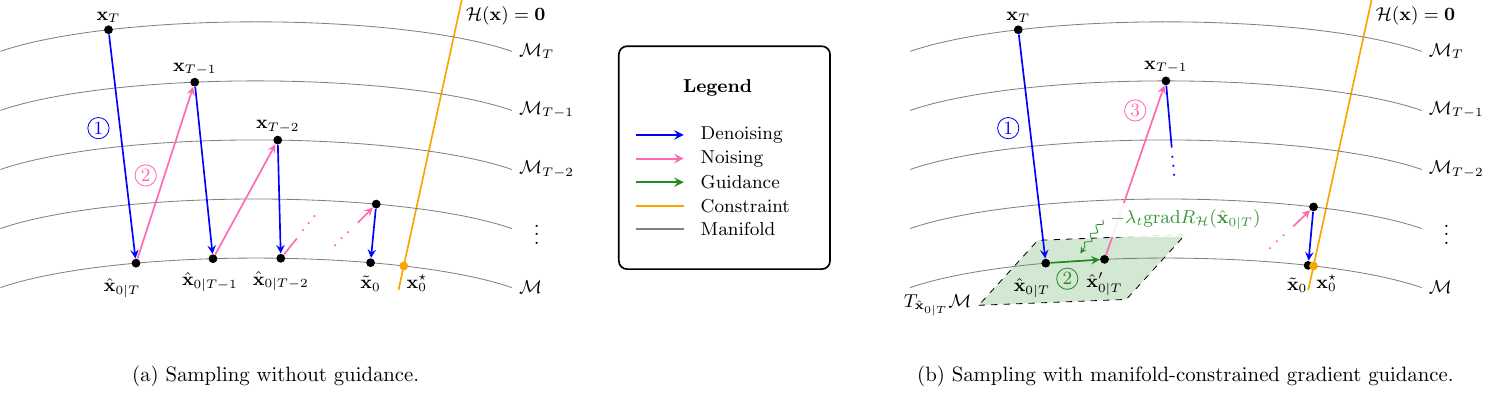}
    \caption{Schematic overview of the geometry of sampling (a) without guidance and (b) with manifold-constrained gradient guidance. In sampling without guidance (a), at each step $t$, we have a 2-stage reverse diffusion step: (1) we do a denoising step based on $\mathbf{x}_t$ and estimate the clean data $\hat{\mathbf{x}}_{0|t}$, and (2) by adding noise with respect to the corresponding noise schedule, we obtain $\mathbf{x}_{t-1}$. In sampling with guidance (b), we have a 3-stage reverse diffusion step: (1) we do a denoising step based on $\mathbf{x}_t$ and estimate the clean data $\hat{\mathbf{x}}_{0|t}$, (2) we add the guidance term based on the gradient of the constraints residual function $R_{\mathcal{H}}(\hat{\mathbf{x}}_{0|t})$ and obtain $\hat{\mathbf{x}}'_{0|t}$ , and (3) by adding noise with respect to the corresponding noise schedule, we obtain $\mathbf{x}_{t-1}$.}
    \label{fig:sampling_geometries}

\end{figure*}

Figure~\ref{fig:sampling_geometries}a illustrates the geometry of standard sampling in a diffusion model. This geometry is characterized by a sequence of manifolds $\{\mathcal{M}_i\}_{i=0}^{T}$. At the bottom, there exists a clean data manifold $\mathcal{M}=\mathcal{M}_0$ surrounded by noisier manifolds according to the noise schedule, where the noisy data resides. Furthermore, let $\mathcal{H}(\mb{x})=0$ represent the power flow equations \eqref{detACOPF1}, where their intersection with the clean data manifold $\mathcal{M}$ is the ideal sample $\mb{x}^{\star}_0$. Accordingly, reverse diffusion steps can be characterized as mere transitions from manifold $\mathcal{M}_i$ to $\mathcal{M}_{i-1}$. At each step $t$, we have a 2-stage reverse diffusion step. First, we do a denoising step based on $\mathbf{x}_t$ and estimate the clean data $\hat{\mathbf{x}}_{0|t}$. Due to the geometric interpretation of diffusion models, a single denoising step at $t$ from manifold $\mathcal{M}_t$ can be viewed as an orthogonal projection onto the clean data manifold $\mathcal{M}$ \cite{chung2022improving}. Then, by adding noise with respect to the corresponding noise schedule, we obtain $\mathbf{x}_{t-1}$. As shown in Fig.~\ref{fig:sampling_geometries}a, the standard sampling process is oblivious to the power flow constraints $\mathcal{H}(\mb{x})=0$. That is, no information about these constraints is incorporated into the sampling process.

To address this issue, we propose to incorporate a guidance term during the sampling process to encourage constraint satisfaction. The proposed guidance term, inspired by manifold-constrained gradients, incorporates the constraint information. Specifically, we define a data consistency loss function as a residual of power flow constraints $\mathcal{H}(\mb{x})$:
\begin{equation}
    R_{\mathcal{H}}(\mathbf{x}) = \|\mathcal{H}(\mathbf{x})\|_2^2,
\end{equation}
and aim to minimize this loss over the clean data manifold $\mathcal{M}$, which is implicitly learned by the diffusion model:
\begin{equation}
    \min_{\mathbf{x} \in \mathcal{M}} R_{\mathcal{H}}(\mathbf{x}).
    \label{mcg}
\end{equation}
To guide sampling trajectories at each denoising step, we apply a Riemannian gradient descent with respect to (\ref{mcg}):
\begin{equation}\label{gradient_step_Riemannian}
    \hat{\mathbf{x}}_{0|t}^{\prime} = \hat{\mathbf{x}}_{0|t} - \lambda_t~\text{grad}~R_{\mathcal{H}}(\hat{\mathbf{x}}_{0|t}),
\end{equation}
where $\text{grad}~R_{\mathcal{H}}(\hat{\mathbf{x}}_{0|t})$ denotes the Riemannian gradient of $R_{\mathcal{H}}$ at $\hat{\mathbf{x}}_{0|t}$, defined as the projection of the Euclidean gradient onto the tangent space of the manifold $\mathcal{M}$ at $\hat{\mathbf{x}}_{0|t}$, i.e., $T_{\hat{\mathbf{x}}_{0|t}}\mathcal{M}$  \cite{boumal2023introduction}:
\begin{equation}\label{eq:projection}
    \text{grad}~R_{\mathcal{H}}(\hat{\mathbf{x}}_{0|t}) = \mathcal{P}_{T_{\hat{\mathbf{x}}_{0|t}}\mathcal{M}} \left(\nabla_{{\mathbf{x}}_{t}} R_{\mathcal{H}}(\hat{\mathbf{x}}_{0|t})\right).
\end{equation}

However, since the clean data manifold $\mathcal{M}$ is not known explicitly, this makes it intractable to compute the projection operator $\mathcal{P}_{T_{\hat{\mathbf{x}}_{0|t}}\mathcal{M}}$ directly. Fortunately, under a local linearity assumption \cite{chung2022improving}, it can be shown that the Euclidean gradient of $R_{\mathcal{H}}$ at $\hat{\mathbf{x}}_{0|t}$ is already aligned with the tangent space of $\mathcal{M}$ at $\hat{\mathbf{x}}_{0|t}$, making the projection step unnecessary. Leveraging the main theorem from \cite{chung2022improving} on manifold-constrained gradients, we formalize this insight in the following theorem.
\begin{theorem}\label{theorem_1}
Let $\mathcal{M}$ denote the clean data manifold, and assume that in a local neighborhood of $\hat{\mathbf{x}}_{0|t}$, $\mathcal{M}$ is well approximated by an affine subspace. Then, the gradient of the residual function $R_{\mathcal{H}}(\hat{\mathbf{x}}_{0|t})$ is tangential to $\mathcal{M}$, i.e.,
\begin{equation}
    \mathcal{P}_{T_{\hat{\mathbf{x}}_{0|t}}\mathcal{M}}\left(\nabla_{{\mathbf{x}}_{t}} R_{\mathcal{H}}(\hat{\mathbf{x}}_{0|t})\right)=\nabla_{{\mathbf{x}}_{t}} R_{\mathcal{H}}(\hat{\mathbf{x}}_{0|t}).
\end{equation}
\end{theorem}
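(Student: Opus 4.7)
The plan is to unpack the two-step composition $\mathbf{x}_t \mapsto \hat{\mathbf{x}}_{0|t} \mapsto R_{\mathcal{H}}(\hat{\mathbf{x}}_{0|t})$ and use the chain rule, so that the structure of the Tweedie denoiser does the work of the projector $\mathcal{P}_{T_{\hat{\mathbf{x}}_{0|t}}\mathcal{M}}$ for free. First I would interpret the Tweedie step \eqref{sampling_1} as (approximately) an orthogonal projection onto the clean data manifold, following the geometric viewpoint already invoked in the paragraph before the theorem and in \cite{chung2022improving}. Concretely, I would write $\hat{\mathbf{x}}_{0|t} = \Pi_{\mathcal{M}}(\mathbf{x}_t)$, where $\Pi_{\mathcal{M}}$ denotes the denoiser viewed as a projection.

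Next I would exploit the local linearity hypothesis. If $\mathcal{M}$ coincides with an affine subspace $\mathcal{A} = \hat{\mathbf{x}}_{0|t} + T_{\hat{\mathbf{x}}_{0|t}}\mathcal{M}$ in a neighborhood of $\hat{\mathbf{x}}_{0|t}$, then locally $\Pi_{\mathcal{M}}$ coincides with the orthogonal projection $\Pi_{\mathcal{A}}$ onto that affine subspace, which is a linear map whose Jacobian is exactly the orthogonal projector $P := \mathcal{P}_{T_{\hat{\mathbf{x}}_{0|t}}\mathcal{M}}$ onto the tangent space. Thus
\begin{equation}
\frac{\partial \hat{\mathbf{x}}_{0|t}}{\partial \mathbf{x}_t} \;=\; P,
\end{equation}
and in particular $P^\top = P$ and $P^2 = P$.

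Applying the chain rule to $R_{\mathcal{H}} \circ \Pi_{\mathcal{M}}$ at $\mathbf{x}_t$ then gives
\begin{equation}
\nabla_{\mathbf{x}_t} R_{\mathcal{H}}(\hat{\mathbf{x}}_{0|t}) \;=\; P^{\top}\,\nabla_{\hat{\mathbf{x}}_{0|t}} R_{\mathcal{H}}(\hat{\mathbf{x}}_{0|t}) \;=\; P\,\nabla_{\hat{\mathbf{x}}_{0|t}} R_{\mathcal{H}}(\hat{\mathbf{x}}_{0|t}),
\end{equation}
so $\nabla_{\mathbf{x}_t} R_{\mathcal{H}}(\hat{\mathbf{x}}_{0|t})$ already lies in $T_{\hat{\mathbf{x}}_{0|t}}\mathcal{M}$. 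Using idempotence $P^2 = P$, one more application of $P$ leaves the vector unchanged, i.e.\ $\mathcal{P}_{T_{\hat{\mathbf{x}}_{0|t}}\mathcal{M}}\bigl(\nabla_{\mathbf{x}_t} R_{\mathcal{H}}(\hat{\mathbf{x}}_{0|t})\bigr) = \nabla_{\mathbf{x}_t} R_{\mathcal{H}}(\hat{\mathbf{x}}_{0|t})$, which is the claim. This also justifies dropping the explicit projection in the Riemannian update \eqref{gradient_step_Riemannian} and using the plain Euclidean gradient through the denoiser.

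The delicate step, and the main obstacle, is the first one: rigorously identifying the Tweedie posterior-mean map with an orthogonal projection whose Jacobian is $P$. Strictly speaking, Tweedie yields $\hat{\mathbf{x}}_{0|t} = \mathbf{x}_t + (1-\bar{\alpha}_t)\,\nabla_{\mathbf{x}_t}\log p_t(\mathbf{x}_t)/\sqrt{\bar{\alpha}_t}$, and only under the local affine approximation of $\mathcal{M}$ (together with the Gaussian forward kernel) does this reduce to orthogonal projection with Jacobian $P$; the higher-order curvature terms must vanish. I would therefore state the local-linearity assumption precisely and simply invoke the corresponding lemma in \cite{chung2022improving} rather than re-derive it, after which the chain-rule argument above closes the proof in a few lines.
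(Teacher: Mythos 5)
Your proposal is correct and follows essentially the same route as the paper: both apply the chain rule to the composition $R_{\mathcal{H}}\circ Q$ (with $Q$ the Tweedie denoiser) and invoke the result of \cite{chung2022improving} (Proposition~2 there) that under the local affine approximation the Jacobian $J_Q$ is a symmetric, idempotent orthogonal projector onto $T_{\hat{\mathbf{x}}_{0|t}}\mathcal{M}$, so the Euclidean gradient already lies in the tangent space and the projection acts as the identity. Your handling of the ``delicate step''---citing that proposition rather than re-deriving the projection property of the denoiser---is exactly what the authors do.
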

\begin{proof}[Proof]
Let $\mb{x}_t$ denote a noisy sample at diffusion step $t$, and let $Q:\mathbb{R}^{4B} \rightarrow \mathbb{R}^{4B}$ denote the function that maps $\mb{x}_t$  to its corresponding clean estimate $\hat{\mathbf{x}}_{0|t}$:
\begin{equation}
\hat{\mathbf{x}}_{0|t} =Q(\mb{x}_t)= \frac{1}{\sqrt{\bar{\alpha}_t}} \left( \mathbf{x}_t - \sqrt{1 - \bar{\alpha}_t}~\epsilon_\theta(\mathbf{x}_t, t) \right).
\end{equation}
Recall the data consistency loss function
\begin{equation}\label{eq:data_consistency_loss}
  R_{\mathcal{H}}(Q(\mb{x}_t)) = \left\| \mathcal{H}(Q(\mb{x}_t)) \right\|_2^2,  
\end{equation}
which evaluates the violation of the constraint function $\mathcal{H}$ on the clean estimate \( \hat{\mathbf{x}}_{0|t} = Q(\mb{x}_t) \).

We are interested in computing the gradient of (\ref{eq:data_consistency_loss}) with respect to the noisy input $\mb{x}_t$, i.e., $\nabla_{\mb{x}_t} R_{\mathcal{H}}(Q(\mb{x}_t))$. Applying the chain rule yields:
\begin{align}\label{eq:chain_rule}
\nabla_{\mathbf{x}_t} R_{\mathcal{H}}(Q(\mathbf{x}_t)) 
&= \nabla_{\mathbf{x}_t} \left\| \mathcal{H}(Q(\mathbf{x}_t)) \right\|_2^2 \notag \\
&= \nabla_{\mathbf{x}_t} \left( \mathcal{H}(Q(\mathbf{x}_t))^\top \mathcal{H}(Q(\mathbf{x}_t)) \right) \notag \\
&= 2 \left( \nabla_{\mathbf{x}_t} \mathcal{H}(Q(\mathbf{x}_t)) \right)^\top \mathcal{H}(Q(\mathbf{x}_t)) \notag \\
&= 2 \left( J_{\mathcal{H}}(Q(\mathbf{x}_t))~J_Q(\mathbf{x}_t) \right)^\top \mathcal{H}(Q(\mathbf{x}_t)) \notag \\
&= 2 J_Q(\mathbf{x}_t)^\top J_{\mathcal{H}}(Q(\mathbf{x}_t))^\top \mathcal{H}(Q(\mathbf{x}_t)),
\end{align}
where $J_Q(\mb{x}_t) $ is the Jacobian of the map $Q$ and $J_{\mathcal{H}}(Q(\mb{x}_t))$ is the Jacobian of the constraint function $\mathcal{H}$ evaluated at the clean data estimate $\hat{\mathbf{x}}_{0|t}=Q(\mathbf{x}_t)$.

Now, according to Proposition~2 in \cite{chung2022improving}, the map $Q$ behaves locally as an orthogonal projection onto the clean data manifold $\mathcal{M}$:
\begin{equation}
Q(\mb{x}_t) \in \mathcal{M},
\end{equation}
\begin{equation}
J_Q(\mb{x}_t) = J_Q(\mb{x}_t)^\top = J_Q(\mb{x}_t)^2,
\end{equation}
which implies that $J_Q(\mb{x}_t)$ is an orthogonal projection that projects onto the tangent space $T_{Q(\mb{x}_t)}\mathcal{M}$ at $Q(\mathbf{x}_t)$. As a result, the gradient
\begin{equation}
\nabla_{\mb{x}_t} R_{\mathcal{H}}(Q(\mb{x}_t)) = J_Q(\mb{x}_t)^\top v,
\end{equation}
where $v=2 J_{\mathcal{H}}(Q(\mb{x}_t))^\top \mathcal{H}(Q(\mb{x}_t))$ due to (\ref{eq:chain_rule}), is already in the tangent space \( T_{Q(\mb{x}_t)}\mathcal{M} \). Therefore, projecting it onto the tangent space does not change it:
\begin{equation}
\mathcal{P}_{T_{Q(\mb{x}_t)} \mathcal{M}}\left( \nabla_{\mb{x}_t} R_{\mathcal{H}}(Q(\mb{x}_t)) \right) 
= \nabla_{\mb{x}_t} R_{\mathcal{H}}(Q(\mb{x}_t)).
\end{equation}

Hence, the gradient of the constraint residual function $R_{\mathcal{H}}(Q(\mb{x}_t))$ with respect to the noisy sample $\mb{x}_t$ lies in the tangent space of the clean data manifold $\mathcal{M}$ at $\hat{\mathbf{x}}_{0|t}$; thus, no projection is needed.
\end{proof}

Substituting the result from Theorem~\ref{theorem_1} in (\ref{gradient_step_Riemannian}) yields the following practical correction rule for gradient guidance:
\begin{equation}
\label{gradient_step_Euclidean}
    \hat{\mathbf{x}}_{0|t}^{\prime} = \hat{\mathbf{x}}_{0|t} - \lambda_t~\nabla_{{\mathbf{x}}_{t}} R_{\mathcal{H}}(\hat{\mathbf{x}}_{0|t}),
\end{equation}
where $\lambda_t$ is a hyperparameter controlling the strength of the guidance at step $t$. Although a moderate $\lambda_t$ encourages the generation of feasible samples, excessively large values can distort the sampling trajectory, pushing samples off the data manifold or even causing instability. This occurs because large values of $\lambda_t$ violate the affine subspace assumption in Theorem~\ref{theorem_1}, thereby undermining the validity of the guidance direction. Conversely, small $\lambda_t$ results in samples that violate the constraints. Hence, $\lambda_t$ should be carefully tuned in practice to balance constraint satisfaction and statistical representation.

Figure~\ref{fig:sampling_geometries}b illustrates the geometry of sampling with the manifold-constrained gradient guidance. Unlike standard sampling, an additional step is incorporated based on the gradient of the constraint residual function $R_{\mathcal{H}}(\hat{\mathbf{x}}_{0|t})$. The guidance term steers the sampling trajectory toward the intersection $\mathbf{x}^{\star}_0$ of the constraint $\mathcal{H}(\mb{x})=0$ and the clean data manifold $\mathcal{M}$. Since the guidance term is tangential to the clean data manifold $\mathcal{M}$, $\hat{\mathbf{x}}'_{0|t}$ remains on the clean data manifold $\mathcal{M}$, ensuring that the final sample is both feasible and statistically representative.

To enforce inequality constraints \eqref{detACOPF3}, we implement a similar approach. Consider the following inequality constraints
\begin{equation}
\mathcal{G}(\mathbf{x})\leq 0,
\end{equation}
for which the residual function $R_\mathcal{G}(\cdot)$ is defined as
\begin{equation}
R_\mathcal{G}(\mathbf{x})=\|\max(\mathcal{G}(\mathbf{x}),0)\|_{2}^{2}.
\end{equation}
The correction rule is similar to \eqref{gradient_step_Euclidean}, with the gradient guidance term defined as
\begin{equation}\label{ineq_guidance}
   -\nabla_{{\mathbf{x}}_{t}} R_{\mathcal{G}}(\hat{\mathbf{x}}_{0|t})=-\nabla_{{\mathbf{x}}_{t}}\|\max(\mathcal{G}(\hat{\mathbf{x}}_{0|t}),0)\|_{2}^{2}.
\end{equation}

The gradient guidance terms are incorporated into the sampling process as shown in Algorithm~\ref{alg3}. The full expressions of the guidance terms, specific to the AC power flow constraints, are provided in the e-companion of this paper \cite{hoseinpour2025constrained}. 
Due to Step 4, the guidance terms modify the sampling path at each reverse diffusion step. We also omit the subscript $t$ from the estimated clean data $\hat{\mathbf{x}}_{0|t}$ and denote it by $\hat{\mathbf{x}}_{0}$.

\begin{algorithm}[tb]
\caption{: Sampling with gradient guidance}
\label{alg3}
\textbf{Inputs}: trained neural network $\mbg{\epsilon}_{\theta}$, noise schedule $\{\alpha_{t}\}_{t=1}^{T}$, noise scale $\sigma_t$, guidance scale $\lambda_t$\\
\textbf{Outputs}: new data point $\tilde{\mathbf{x}}_0$
\begin{algorithmic}[1]
\State $\mathbf{x}_T \sim \mathcal{N}(0,  \mathbf{I}_{4B})$
\For{$t = T-1$ to $0$}
    \State $\hat{\mathbf{x}}_{0} \gets \frac{1}{\sqrt{\bar{\alpha}_t}} \left( \mathbf{x}_t - \sqrt{1 - \bar{\alpha}_t} \mbg{\epsilon}_{\theta}(\mathbf{x}_t, t) \right)$
     \State $\hat{\mathbf{x}}_{0}^{\prime} \gets \hat{\mathbf{x}}_{0} - \lambda_t~\left(\nabla_{{\mathbf{x}}_{t}} R_{\mathcal{H}}(\hat{\mathbf{x}}_{0})+\nabla_{{\mathbf{x}}_{t}} R_{\mathcal{G}}(\hat{\mathbf{x}}_{0})\right)$
    \State $z \sim \mathcal{N}(0,  \mathbf{I}_{4B})$
    \State $\mathbf{x}_{t-1} \gets \frac{\sqrt{\alpha_t}(1 - \bar{\alpha}_{t-1})}{1 - \bar{\alpha}_t} \mathbf{x}_t + \frac{\sqrt{\bar{\alpha}_{t-1}} \beta_t}{1 - \bar{\alpha}_t}\hat{\mathbf{x}}'_{0} + \sigma_t \mb{z}$
\EndFor
\State\textbf{return} $\tilde{\mathbf{x}}_0$
\end{algorithmic}
\end{algorithm}
\section{Practical Implementation via Variable Decoupling and Normalization}\label{implementation_strategies}
We present two practical techniques that leverage domain knowledge in power systems to improve (i) computational efficiency of the proposed constrained diffusion model and  (ii) scale consistency of the gradient guidance during sampling. 
\subsection{Variable Decoupling for Computational Efficiency}
\label{variable_decoupling}
In high-voltage transmission systems, active power injection $\mathbf{p}$ highly correlates with $\mbg{\theta}$ and less so with voltage magnitude $\mathbf{v}$, while reactive power injection $\mathbf{q}$ primarily correlates with $\mathbf{v}$ and weakly correlates with phase angle $\mbg{\theta}$ \cite{portelinha2021fast,monticelli2012state}. This observation underlies the classical fast decoupled power flow method and motivates our variable decoupling strategy. We split the full vector $\mb{x}_0=(\mb{p},\mb{q},\mb{v},\mbg{\theta})\in\mathbb{R}^{4B}$ into two lower-dimensional vectors  $\mb{x}^{(1)}_{0}=(\mathbf{p}, \mbg{\theta})\in\mathbb{R}^{2B}$ and $\mb{x}^{(2)}_{0}=(\mathbf{q}, \mathbf{v})\in\mathbb{R}^{2B}$. The diffusion loss $\mathcal{L}_{\text{diff}}$ in \eqref{loss_function} is thus split between two denoiser neural networks:
\begin{equation}
\mathcal{L}_{\text{diff}}=\mathcal{L}^{(1)}_{\text{diff}}+\mathcal{L}^{(2)}_{\text{diff}},
\label{loss_decoupled}
\end{equation}
where $\mathcal{L}^{(1)}_{\text{diff}}$ and $\mathcal{L}^{(2)}_{\text{diff}}$ correspond to $\mb{x}^{(1)}_{0}$ and $\mb{x}^{(2)}_{0}$, respectively.

Due to (\ref{loss_function}), $\mathcal{L}^{(1)}_{\text{diff}}$ is defined as
\begin{equation}
\mathcal{L}^{(1)}_{\text{diff}}= \mathbb{E}_{\mb{x}^{(1)}_0, \mbg{\epsilon}_{1}, t}  \left\|\mbg{\epsilon}_{1} - \mbg{\epsilon}_{\theta_1}(\sqrt{\bar{\alpha}_t}\,\mathbf{x}^{(1)}_0 + \sqrt{1 - \bar{\alpha}_t}\,\mbg{\epsilon}_1, t) \right\|^2,
\label{loss_function_p&theta}
\end{equation}
where $\mbg{\epsilon}_{1}$ and $\mbg{\epsilon}_{\theta_1}$ are the actual and predicted noise at time step $t$ of the forward process. Similarly, $\mathcal{L}^{(2)}_{\text{diff}}$ is defined as
\begin{equation}
\mathcal{L}^{(2)}_{\text{diff}}= \mathbb{E}_{\mb{x}^{(2)}_0, \mbg{\epsilon}_{2}, t}  \left\|\mbg{\epsilon}_{2} - \mbg{\epsilon}_{\theta_2}(\sqrt{\bar{\alpha}_t}\,\mathbf{x}^{(2)}_0 + \sqrt{1 - \bar{\alpha}_t}\,\mbg{\epsilon}_2, t) \right\|^2,
\label{loss_function_q&v}
\end{equation}
where $\mbg{\epsilon}_{2}$ and $\mbg{\epsilon}_{\theta_2}$ are the actual and predicted noise. Algorithm~\ref{alg4} demonstrates the training of the diffusion model under variable decoupling. Similarly, to generate new data points, Algorithm~\ref{alg2} can be adapted based on the variable decoupling approach, resulting in Algorithm~\ref{alg5}.
\begin{algorithm}[tb]
\caption{:~Training the diffusion model under variable decoupling}
\label{alg4}
\textbf{Inputs}: initialized neural networks $\mbg{\epsilon}_{\theta_1}$ and $\mbg{\epsilon}_{\theta_2}$, noise schedule $\{\alpha_{t}\}_{t=1}^{T}$, dataset of $\mathbf{x}_0$'s sampled from $q_0$\\
\textbf{Outputs}: trained neural networks $\mbg{\epsilon}_{\theta_1}$ and $\mbg{\epsilon}_{\theta_2}$
\begin{algorithmic}[1]
\Repeat

    \State $\mathbf{x}_0 \sim q_0(\mathbf{x}_0)$
    \State $\mathbf{x}^{(1)}_{0},\mathbf{x}^{(2)}_{0} \gets \mathbf{x}_{0}$ \hspace{1em} $\triangleright$ split vector

    \State $t \sim \text{Uniform}(\{1, \dots, T\})$
    \State $\mbg{\epsilon}_1,\mbg{\epsilon}_2 \sim \mathcal{N}(0, \mathbf{I}_{2B})$
    \State Take gradient descent step on
    \begin{equation*}
        \begin{split}
             &\mathbb{E}_{\mb{x}^{(1)}_0, \mbg{\epsilon}_{1}, t}  \left\|\mbg{\epsilon}_{1} -\mbg{\epsilon}_{\theta_1}(\sqrt{\bar{\alpha}_t}\,\mathbf{x}^{(1)}_0 + \sqrt{1 - \bar{\alpha}_t}\,\mbg{\epsilon}_1, t) \right\|^2+\\
             &\mathbb{E}_{\mb{x}^{(2)}_0, \mbg{\epsilon}_{2}, t}  \left\|\mbg{\epsilon}_{2} - \mbg{\epsilon}_{\theta_2}(\sqrt{\bar{\alpha}_t}\,\mathbf{x}^{(2)}_0 + \sqrt{1 - \bar{\alpha}_t}\,\mbg{\epsilon}_2, t) \right\|^2
        \end{split}
    \end{equation*}
   
\Until{converged}
\end{algorithmic}
\end{algorithm}
\begin{algorithm}[tb]
\caption{: Sampling with gradient guidance under variable decoupling}
\label{alg5}
\textbf{Inputs}: trained neural networks $\mbg{\epsilon}_{\theta_1}$ and $\mbg{\epsilon}_{\theta_2}$, noise schedule $\{\alpha_{t}\}_{t=1}^{T}$, noise scale $\sigma_t$, guidance scale $\lambda_t$\\
\textbf{Outputs}: new data point $\tilde{\mathbf{x}}_0$
\begin{algorithmic}[1]
\State $\mathbf{x}_{T}  \sim \mathcal{N}(0, \mathbf{I}_{4B})$
\State $\mathbf{x}^{(1)}_{T},\mathbf{x}^{(2)}_{T} \gets \mathbf{x}_{T}$ \hspace{1em} $\triangleright$ split vector

\For{$t = T-1$ to $0$}
    \vspace{.1cm}
    \State $\hat{\mathbf{x}}^{(1)}_{0} \gets \frac{1}{\sqrt{\bar{\alpha}_t}} \left( \mathbf{x}^{(1)}_t - \sqrt{1 - \bar{\alpha}_t}\mbg{\epsilon}_{\theta_1} (\mathbf{x}^{(1)}_t, t) \right)$
    \vspace{.1cm}
    \State $\hat{\mathbf{x}}^{(2)}_{0} \gets \frac{1}{\sqrt{\bar{\alpha}_t}} \left( \mathbf{x}^{(2)}_t - \sqrt{1 - \bar{\alpha}_t} \mbg{\epsilon}_{\theta_2}(\mathbf{x}^{(2)}_t, t) \right)$
    \State $\hat{\mathbf{x}}_{0} \gets\hat{\mathbf{x}}^{(1)}_{0}\mathbin\Vert\hat{\mathbf{x}}^{(2)}_{0}$ \hspace{1em} $\triangleright$ concatenate vectors

     \State $\hat{\mathbf{x}}_{0}^{\prime} \gets \hat{\mathbf{x}}_{0} - \lambda_t~\left(\nabla_{{\mathbf{x}}_{t}} R_{\mathcal{H}}(\hat{\mathbf{x}}_{0})+\nabla_{{\mathbf{x}}_{t}} R_{\mathcal{G}}(\hat{\mathbf{x}}_{0})\right)$
    \State $\mb{z} \sim \mathcal{N}(0,  \mathbf{I}_{4B})$
    \State $\mathbf{x}_{t-1} \gets \frac{\sqrt{\alpha_t}(1 - \bar{\alpha}_{t-1})}{1 - \bar{\alpha}_t} \mathbf{x}_t + \frac{\sqrt{\bar{\alpha}_{t-1}} \beta_t}{1 - \bar{\alpha}_t}\hat{\mathbf{x}}'_{0} + \sigma_t \mb{z}$
    \State $\mathbf{x}^{(1)}_{t-1},\mathbf{x}^{(2)}_{t-1} \gets \mathbf{x}_{t-1}$ \hspace{1em} $\triangleright$ split vector

\EndFor
\State\textbf{return} $\tilde{\mathbf{x}}_0$
\end{algorithmic}
\end{algorithm}
\subsection{Normalization for Scale-Consistent Gradient Guidance}
Normalization in power systems is traditionally achieved via p.u. transformation \cite{glover2001power}, bringing all variables to a common basis. However, p.u. transformation alone is insufficient for diffusion, as it does not normalize the variables to a unified numerical range. In fact, the power flow variables $\mb{x}_0=(\mb{p},\mb{q},\mb{v},\mbg{\theta})$ in p.u. still have different numerical ranges and scales. Hence, when computing the gradient guidance term, the difference in scales becomes problematic. Specifically, the elements of the resulting guidance vector inherit the magnitudes of the corresponding variables. As a result, a single scalar guidance scale $\lambda$ may have inconsistent effects, hindering both the convergence and constraint satisfaction during sampling. 

To address this issue, we propose a new normalization of the power flow variables to ensure that the guidance vector has a comparable scale across all the variable types. First, we apply the min-max normalization to the real data prior to training the denoiser, ensuring that all the power flow variables are mapped to the range $[-1,1]$. Specifically, for each data point $\mb{x}_0$, its $i$-{th} variable $\mb{x}_{0,i}$ is transformed as
\begin{equation}
{\mb{x}}^{\mathrm{norm}}_{0,i} = 2\frac{\mb{x}_{0,i} -\mb{x}_{0,i}^{\min}}{\mb{x}_{0,i}^{\max} - \mb{x}_{0,i}^{\min}}-1,\quad \forall i=1,\cdots,4B,
\end{equation}
where $\mb{x}_{0,i}^{\min}$ and $\mb{x}_{0,i}^{\max}$ denote the minimum and maximum values of the $i$-{th} variable in the dataset. The denoiser is then trained using this normalized dataset. Consequently, the entire sampling process is carried out in the normalized space.

To compute the gradient guidance term, we first denormalize the current estimate of the clean sample $\hat{\mb{x}}_0$ using the denormalization function $f_{\text{de}}(\cdot)$:
\begin{equation}
f_{\mathrm{de}}(\hat{\mb{x}}_{0,i}^{\mathrm{norm}})=\left(\frac{\hat{\mb{x}}_{0,i}^{\mathrm{norm}}+1}{2}\right)\left({\mb{x}}_{0,i}^{\mathrm{max}}-{\mb{x}}_{0,i}^{\mathrm{min}}\right)+{\mb{x}}_{0,i}^{\mathrm{min}}.
\end{equation}
The residuals are then evaluated based on actual values. Yet, the derivative in the gradient guidance term is taken with respect to normalized values. By chain rule, we thus have
\begin{equation}
\nabla_{\hat{\mathbf{x}}_0^{\mathrm{norm}}} R_{\mathcal{H}}(\hat{\mathbf{x}}_0^{\mathrm{norm}}) 
= \frac{\partial R_{\mathcal{H}}(f_{\mathrm{de}}(\hat{\mathbf{x}}_0^{\mathrm{norm}}))}{\partial f_{\mathrm{de}}(\hat{\mathbf{x}}_0^{\mathrm{norm}})} 
 \frac{\partial f_{\mathrm{de}}(\hat{\mathbf{x}}_0^{\mathrm{norm}})}{\partial \hat{\mathbf{x}}_0^{\mathrm{norm}}}.
\end{equation}
This approach ensures numerical stability during sampling, while enabling scale-consistent guidance.
\section{Numerical Results}
\label{results}
We evaluate the performance of the proposed constrained diffusion model for synthesizing power flow datasets. We run experiments on three benchmark systems: PJM 5-bus system, IEEE 24-bus system, and IEEE 118-bus system \cite{babaeinejad2019power}. The effectiveness of our approach is evaluated through three analyses: (i) comparing the statistical properties of the synthesized data with those of the ground truth data (Sec.~\ref{results:statistical_similarity}), (ii) analyzing constraint satisfaction (Sec.~\ref{results:constraint_satisfaction}), and (iii) evaluating the utility of the synthesized data in a ML application (Sec.~\ref{results:downstream_task}).

\subsection{Experimental Setup}

Since real-world power flow datasets are not publicly available, we generate the ground truth dataset by applying random perturbations around the nominal load ${s}^{\text{nom}}_b=({{p}}^{\text{nom}}_{d,b},{{q}}^{\text{nom}}_{d,b})$ at each bus $b\in\mathcal{B}$ of the benchmark systems. We uniformly sample active and reactive power demands at each bus ${s}_{b}=({p}_{d,b},{q}_{d,b})\sim \text{Uniform}~(0.8~{s}^{\text{nom}}_b,{s}^{\text{nom}}_b)$, and then solve the AC-OPF problem to extract the feasible solutions $\mb{p},\mb{q},\mb{v},\mbg{\theta}$. By stacking them together into a single data point $(\mb{p}_{i},\mb{q}_{i},\mb{v}_{i},\mbg{\theta}_{i})$, we obtain the ground truth dataset $\mathcal{D}=\{(\mb{p}_{i},\mb{q}_{i},\mb{v}_{i},\mbg{\theta}_{i})\}_{i=1}^{N}$. 

\subsection{ Statistical Similarity}\label{results:statistical_similarity}
The histograms of the ground truth and synthetic power flow variables for the PJM 5-bus system are given in Fig.~\ref{hist_real_vs_syn}. The synthesized variables capture the underlying distribution of the ground truth data. For each class of variables, the synthetic data not only aligns well with the support of the ground truth data but also successfully captures the modes. The proposed diffusion model also ensures similarity of the joint probability distributions, as shown in Fig.~\ref{joint_PQ} and Fig.~\ref{joint_VTHETA} depicting the joint distributions of the active and reactive power injections, and voltage magnitude and phase angel, respectively. If the ground truth data exhibits a multi-modal structure, the synthetic data points successfully capture all modes. Moreover, in regions where the ground truth data is denser, the synthetic data points are also more concentrated, whereas in regions where the density of the ground truth data points decreases, the synthetic data points become more sparse. Another important property is the coverage capability, as shown in Fig.~\ref{joint_PQ} and Fig.~\ref{joint_VTHETA}, where the synthesized data points closely span the entire domain of the ground truth data. 

\begin{figure*}
    \centering
    \subfloat{
        \includegraphics[width=0.95\linewidth]{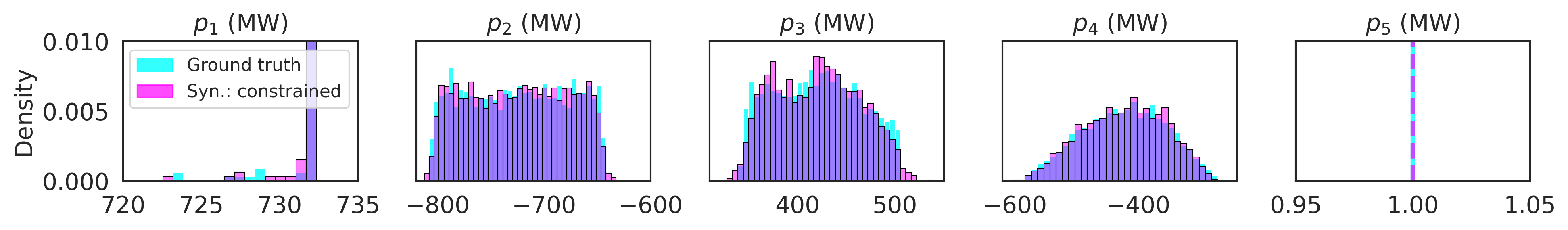}\label{real_vs_synthetic_Pnet}
    }\vspace{-5mm}
    
    \subfloat{
        \includegraphics[width=0.95\linewidth]{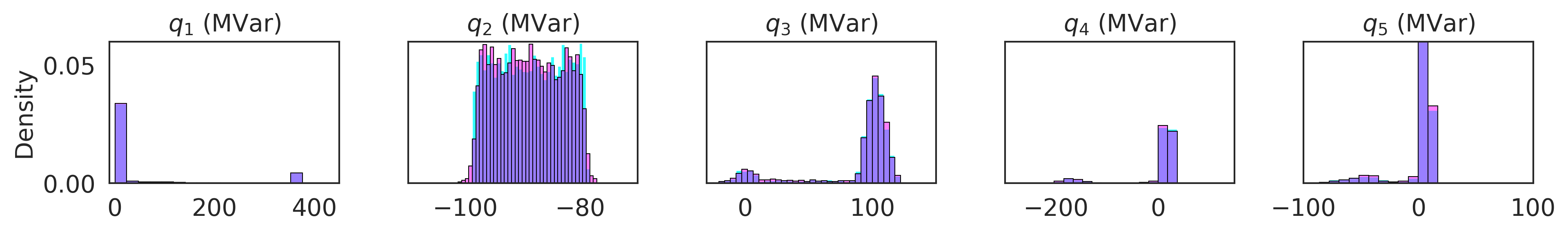}\label{real_vs_synthetic_Qnet}
    }\vspace{-5mm}
    
    \subfloat{
        \includegraphics[width=0.95\linewidth]{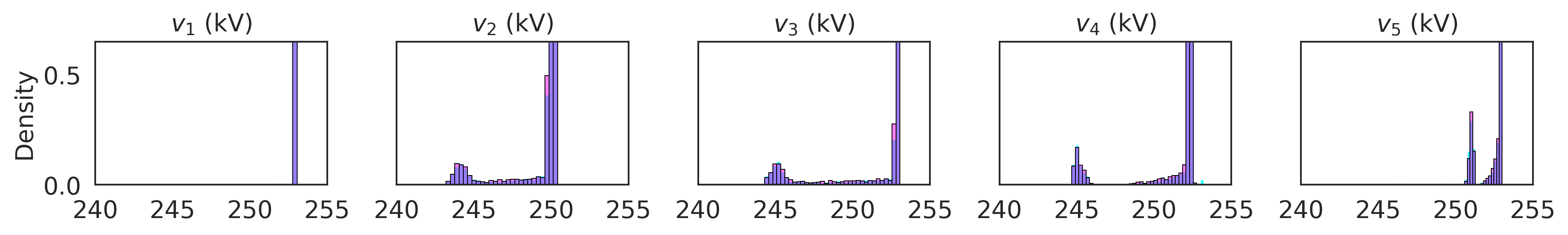}
    }\vspace{-5mm}
    
    \subfloat{
        \includegraphics[width=0.95\linewidth]{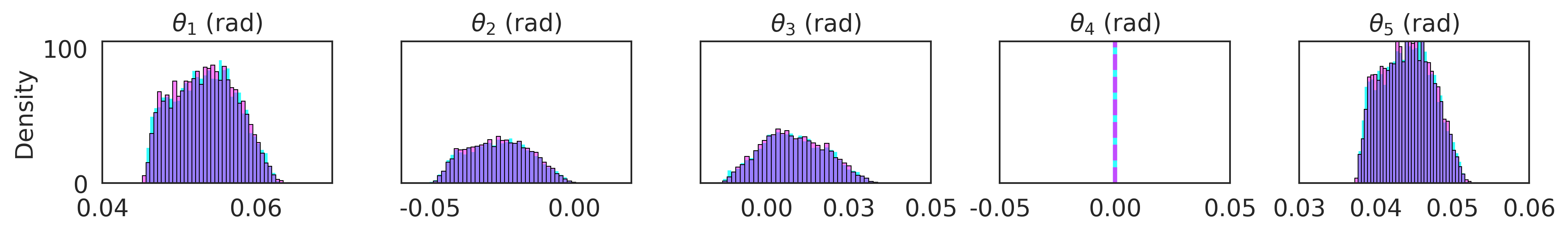}
    }
    \caption{Histograms of the ground truth versus synthetic power flow data points for active power injections (first row), reactive power injections (second row), voltage magnitudes (third row), and phase angles (forth row) at each bus in the PJM 5-bus system.}
    \label{hist_real_vs_syn}
\end{figure*}

\begin{figure*}
    \centering
    \subfloat{
        \includegraphics[width=0.95\linewidth]{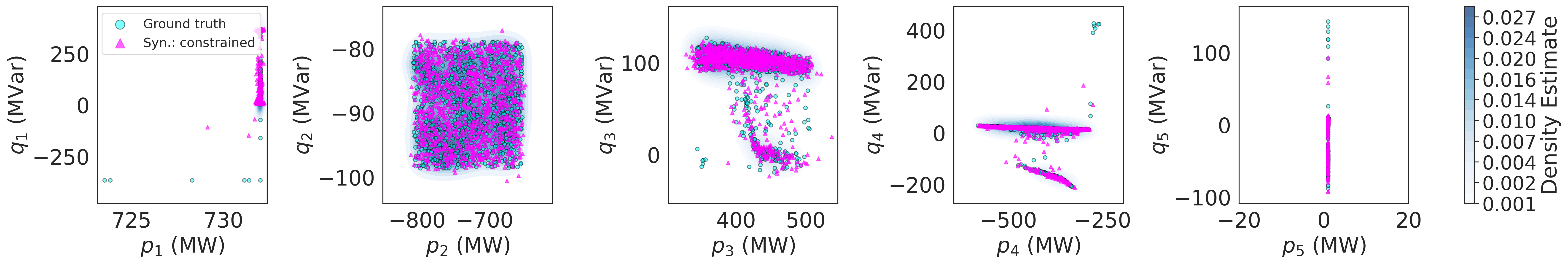}\label{joint_PQ}
    }\vspace{-2mm}
    \subfloat{
        \includegraphics[width=0.95\linewidth]{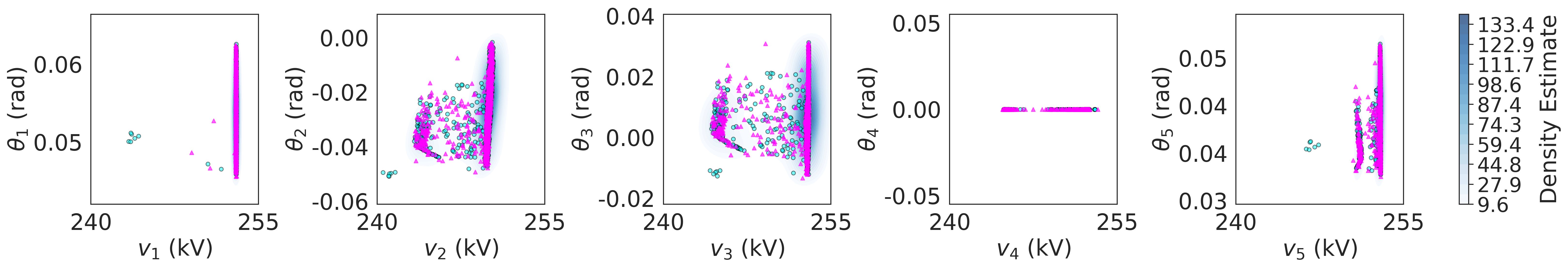}\label{joint_VTHETA}
    }\vspace{0mm}
    
    \caption{2D scatter plots with density estimates of the active and reactive power injection (top row), and voltage magnitude and phase angle (bottom row) at each bus in the PJM 5-bus system, comparing the joint distributions of the ground truth and constrained synthetic datasets. The plots highlight the ability of the diffusion model to replicate the underlying pattern, domain, and multi-modal structure of the ground truth data.} 
\end{figure*}

\begin{table}[t!]
\centering
\caption{Wasserstein distances between the ground truth $\mathcal{D}$ and synthetic data $\widetilde{\mathcal{D}}$}
\label{tab:wasserstein_synthetic_quality}
\renewcommand{\arraystretch}{1.2}
\setlength{\tabcolsep}{8pt}

\begin{tabular}{lccc}
\toprule
Distance between... & 5-Bus & 24-Bus & 118-Bus \\
\midrule
...$\mathcal{D}$ and $\widetilde{\mathcal{D}}$ w/o guidance & 0.442 & 0.607 & 0.622 \\
...$\mathcal{D}$ and $\widetilde{\mathcal{D}}$ w/ guidance & 0.382 & 0.585 & 0.597 \\
\bottomrule
\end{tabular}
\end{table}

To quantify the similarity of the ground truth and synthetic datasets, we use the type-1 Wasserstein distance between these datasets, defined as 
\begin{align}
W_1(\mathcal{D}, \widetilde{\mathcal{D}}) = \min_{\gamma \in \Gamma(\mathcal{D}, \widetilde{\mathcal{D}})} \sum_{i=1}^N \sum_{j=1}^M \gamma_{ij} \| \mb{x}_i - \tilde{\mb{x}}_j \|_2,
\end{align}
where $\Gamma(\mathcal{D}, \widetilde{\mathcal{D}})$ represents the set of all valid ways to assign mass between the ground truth and synthetic samples~\cite{peyré2020computationaloptimaltransport}. The results for synthetic datasets obtained with and without gradient guidance are summarized in Table~\ref{tab:wasserstein_synthetic_quality}. Lower Wasserstein distances indicate closer alignment between synthetic and ground truth distributions. Across all the test systems, the distances remain low, showing that the synthesized power flow data closely mirrors the ground truth data. Enforcing the constraints during sampling further reduces the Wasserstein distance consistently. Thus, we validate that constraint enforcement not only promotes feasibility but also enhances the statistical similarity of the ground truth and synthetic datasets.

\subsection{Constraint Satisfaction}\label{results:constraint_satisfaction}
We evaluate constraint satisfaction of the synthetic datasets generated with and without gradient guidance, specifically focusing on the active and reactive power balance at each bus. Figure~\ref{power_mismatch} presents the histograms of violation magnitudes in the PJM 5-bus system: without the guidance mechanism, a significant portion of the generated samples exhibit non-negligible violations at buses 1, 2, 3, and 5 for the active power balance constraints. With guidance, the vast majority of the samples show near-zero active power mismatch at all buses, indicating strong constraint satisfaction. Similar observations hold for the reactive power balance constraints.

For the IEEE 24-bus system, Table~\ref{tab:mismatch_24bus} reports the mean and variance of the active and reactive power mismatches at each bus. Similar to the PJM 5-bus system, gradient guidance significantly reduces both the mean and variance of constraint violations across most buses. For the IEEE 118-bus system, the results are visualized in Fig.~\ref{fig:pw_mismatch_118} comparing the mean and variance of the power mismatches with and without guidance. Guidance leads to a noticeable improvement for some buses, particularly buses 4 and 5, where the baseline violation is relatively large. For most other buses, the initial violation magnitude is small, which limits the impact of the guidance mechanism. Furthermore, for some buses, the mismatch mean under the constrained sampling is slightly larger. However, this does not contradict the overall effectiveness of the guidance mechanism. We observe that variance plays a more critical role than the mean in determining the quality of constraint satisfaction. That is, a model with a low mismatch mean but high variance may still generate many samples with large constraint violations. The proposed guidance mechanism, instead, ensures that most generated samples remain close to the full satisfaction of the physical constraints.
\begin{figure*}
    \centering
    \subfloat{
        \includegraphics[width=0.95\linewidth]{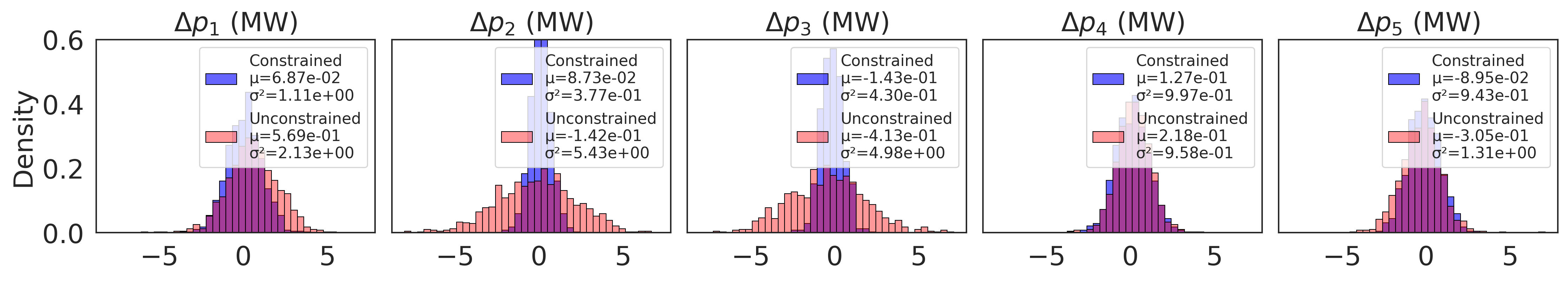} 
    }\vspace{-3mm}
    
    \subfloat{
        \includegraphics[width=0.95\linewidth]{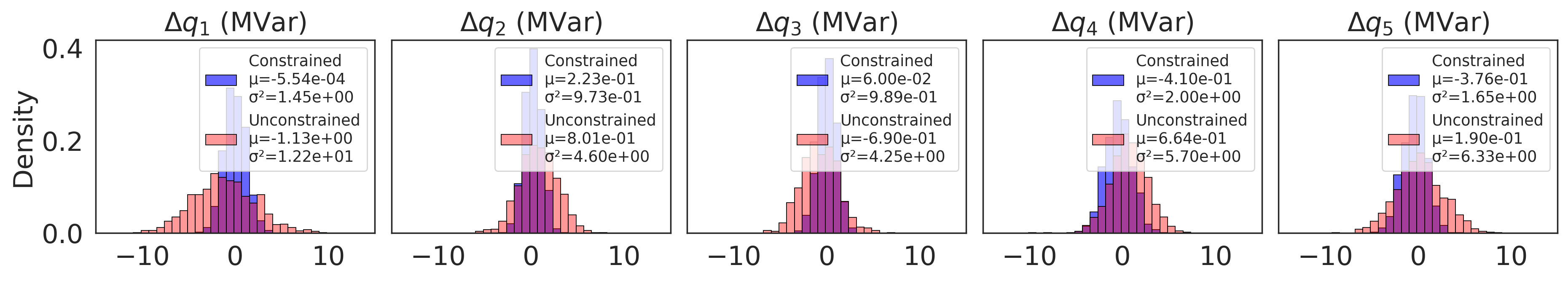}
    }
    \caption{Histograms of violation magnitudes for the active (top row) and reactive (bottom row) power balance constraints in the PJM 5-bus system, comparing synthesized data points under constrained and unconstrained sampling ($\lambda{=}10^{-2}$ vs $\lambda{=}0$).}
    \label{power_mismatch}
\end{figure*}

\begin{table}[t!]
\centering
\caption{Mean and standard deviation of power mismatches of synthesized data points for the IEEE 24-bus system under constrained and unconstrained sampling ($\lambda{=}10^{-4}$ vs $\lambda{=}0$).}
\label{tab:mismatch_24bus}
\renewcommand{\arraystretch}{1.1}
\setlength{\tabcolsep}{4pt}
\begin{tabular}{c*{4}{r}}
\toprule
\multirow{2}{*}{Bus} 
& \multicolumn{2}{c}{{$\Delta p$ (MW)}} 
& \multicolumn{2}{c}{{$\Delta q$ (MVar)}} \\
\cmidrule(lr){2-3} \cmidrule(lr){4-5}
& $\lambda{=}0$\hspace{5mm} & $\lambda{=}10^{-4}$\hspace{2.5mm}
& $\lambda{=}0$\hspace{5mm} & $\lambda{=}10^{-4}$ \hspace{2.5mm}\\
\midrule
1  & $-12.50 \pm 207.10$ & $1.03 \pm 4.80$ & $-1.20 \pm 59.80$ & $-0.60 \pm 5.50$ \\
2  & $10.10 \pm 153.00$  & $-1.50 \pm 4.90$ & $-2.56 \pm 56.20$ & $0.77 \pm 5.40$ \\
3  & $0.24 \pm 7.10$     & $-0.15 \pm 2.80$ & $-0.06 \pm 2.60$ & $-0.08 \pm 1.00$ \\
4  & $-0.30 \pm 9.10$    & $-0.03 \pm 2.80$ & $0.04 \pm 1.40$  & $-0.03 \pm 1.00$ \\
5  & $0.53 \pm 6.10$     & $0.05 \pm 4.00$  & $-0.04 \pm 2.80$ & $-0.04 \pm 1.40$ \\
6  & $0.04 \pm 26.40$    & $-0.85 \pm 3.20$ & $-0.19 \pm 12.80$ & $0.11 \pm 1.00$ \\
7  & $-0.94 \pm 17.80$   & $-0.41 \pm 3.50$ & $0.30 \pm 6.60$  & $0.07 \pm 1.40$ \\
8  & $1.00 \pm 22.60$    & $0.39 \pm 4.60$  & $-0.89 \pm 24.80$ & $-0.15 \pm 1.70$ \\
9  & $0.33 \pm 8.20$     & $0.41 \pm 3.90$  & $-0.36 \pm 8.30$ & $-0.14 \pm 1.00$ \\
10 & $0.85 \pm 7.30$     & $0.92 \pm 4.90$  & $-0.74 \pm 17.50$ & $-0.21 \pm 1.00$ \\
11 & $-0.38 \pm 7.30$    & $-0.21 \pm 2.60$ & $0.01 \pm 2.20$  & $0.05 \pm 1.40$ \\
12 & $-0.92 \pm 24.80$   & $-0.15 \pm 2.20$ & $-0.06 \pm 5.00$ & $-0.02 \pm 1.00$ \\
13 & $0.02 \pm 2.20$     & $0.02 \pm 1.40$  & $0.01 \pm 4.50$  & $0.16 \pm 2.60$ \\
14 & $-0.65 \pm 19.0$   & $-0.09 \pm 3.50$ & $0.02 \pm 2.20$  & $-0.17 \pm 1.70$ \\
15 & $-0.28 \pm 71.80$   & $-0.34 \pm 4.90$ & $-0.42 \pm 14.70$ & $-0.11 \pm 1.70$ \\
16 & $6.97 \pm 189.00$   & $0.22 \pm 3.90$ & $-1.84 \pm 54.70$ & $0.11 \pm 1.70$ \\
17 & $-1.04 \pm 19.60$   & $-0.23 \pm 4.40$ & $-0.48 \pm 19.40$ & $0.07 \pm 1.00$ \\
18 & $-1.35 \pm 94.20$   & $0.48 \pm 3.00$ & $-0.28 \pm 3.60$ & $-0.10 \pm 0.00$ \\
19 & $-2.17 \pm 78.00$   & $-0.17 \pm 4.80$ & $-0.20 \pm 5.70$ & $0.05 \pm 1.00$ \\
20 & $-3.14 \pm 55.60$   & $-0.55 \pm 4.00$ & $0.30 \pm 3.70$  & $0.03 \pm 1.00$ \\
21 & $2.00 \pm 85.80$    & $0.16 \pm 3.90$ & $-0.64 \pm 21.30$ & $0.07 \pm 1.00$ \\
22 & $-0.98 \pm 26.90$   & $-0.22 \pm 3.30$ & $-0.13 \pm 4.00$ & $0.00 \pm 0.00$ \\
23 & $3.23 \pm 68.60$    & $0.51 \pm 4.10$ & $-0.52 \pm 14.60$ & $-0.04 \pm 1.00$ \\
24 & $0.30 \pm 7.80$     & $0.06 \pm 2.80$ & $0.09 \pm 4.70$  & $0.04 \pm 1.00$ \\
\bottomrule
\end{tabular}
\end{table}

\begin{figure*}
    \centering
    \includegraphics[width=1\linewidth]{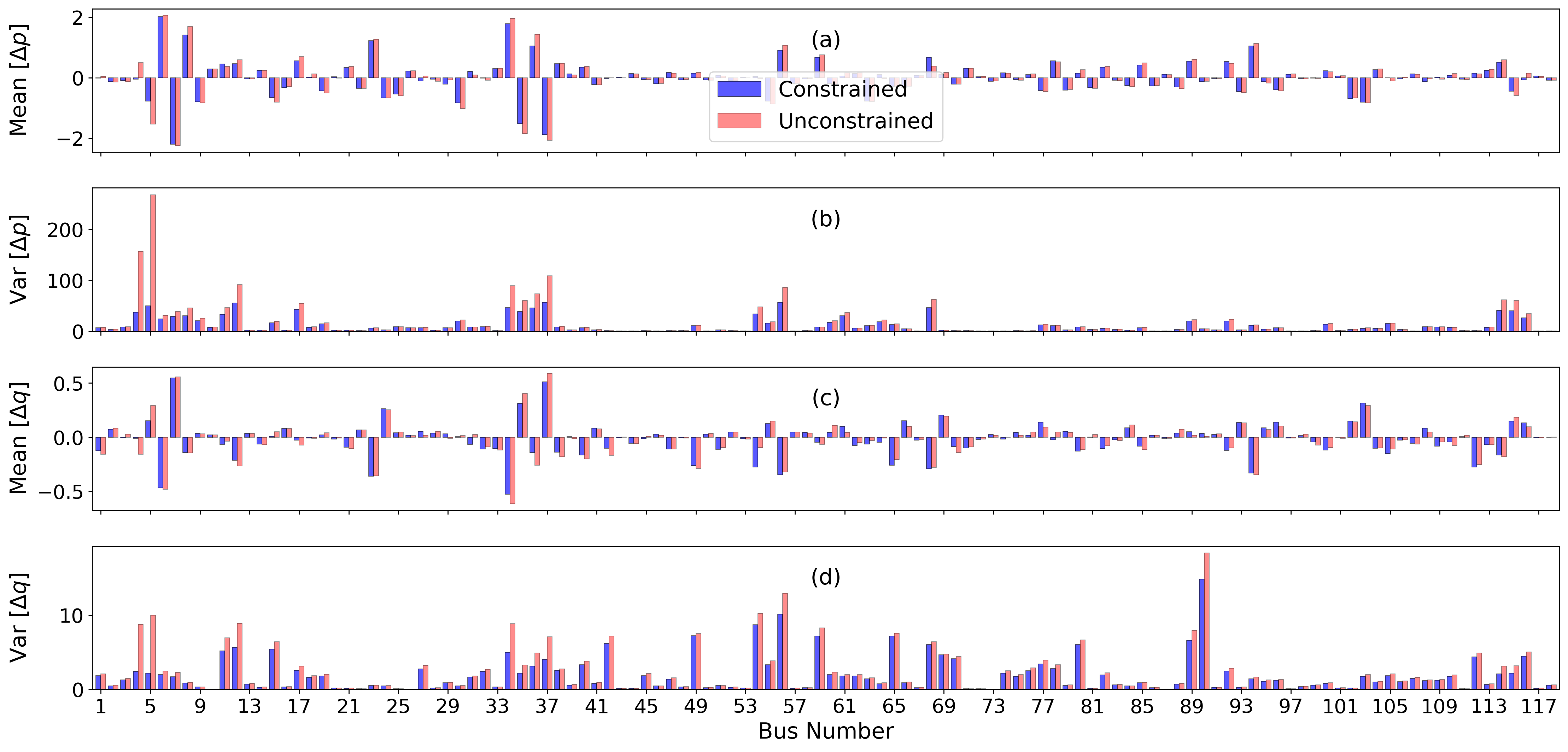}
    \caption{Comparison of per-bus (a) mean ($\mathrm{MW}$) and (b) variance ($\mathrm{MW}^2$) of the active power mismatches, and (c) mean ($\mathrm{MVar}$) and (d) variance ($\mathrm{MVar}^2$) of the reactive power mismatches for the synthesized data on the IEEE 118-bus system under constrained and unconstrained sampling ($\lambda{=}5\times10^{-4}$ vs $\lambda{=}0$).}
    \label{fig:pw_mismatch_118}
\end{figure*}

\subsection{Utility of Synthetic Data in Downstream ML Task}\label{results:downstream_task}
As an additional measure of quality, we study how well the synthesized power flow data performs in a downstream learning task. Specifically, we examine whether constrained synthetic data better supports the learning of efficient warm-start for the Newton--Raphson power flow solver \cite{molzahn2019survey}. We train a neural network $f: \mathbb{R}^{d_k} \rightarrow \mathbb{R}^{d_u}$ that maps the known inputs $\mathbf{x} \in \mathbb{R}^{d_k}$ (e.g., $\mathbf{p}$, $\mathbf{q}$, $\mathbf{v}$, or $\boldsymbol{\theta}$, depending on bus types) to the corresponding unknown outputs $\mathbf{y} \in \mathbb{R}^{d_u}$. The dimensions $d_k$ and $d_u$ depend on the specific test case. We generate two synthetic training datasets of equal size under constrained and unconstrained sampling. Using each dataset, we train a separate neural network to predict $\mathbf{y}$ from $\mathbf{x}$. We use a fully connected feedforward neural network architecture. More advanced architectures could improve accuracy but would not change the relative comparison between datasets. The performance of the models is then evaluated on a common test dataset. The results in Table \ref{tab:powerflow_mse} show that models trained on constrained synthetic data consistently yield smaller active and reactive power mismatches than those trained on unconstrained data across all the test cases. Although the gap between synthetic and ground-truth data remains, enforcing the power flow constraints during data generation clearly leads to predictions that better respect the underlying physics of power systems.

\begin{table}[h!]
\centering
\setlength{\tabcolsep}{4pt} 
\caption{Mean and standard deviation of total active and reactive power mismatches across all buses for models trained on ground truth, constrained synthetic, and unconstrained synthetic datasets (p.u.).}
\label{tab:powerflow_mse}
\begin{tabular}{@{}llcc@{}} 
\toprule
Test Case & Training Data & Mean $\pm$ Std. $|\Delta {p}|$ & Mean $\pm$ Std. $|\Delta {q}|$  \\
\midrule
\multirow{3}{*}{5-Bus} 
  & Ground Truth      & 0.0124 $\pm$ 0.0173 & 0.0242 $\pm$ 0.0469 \\
  & Constrained Syn.     & 0.0200 $\pm$ 0.0173 & 0.0434 $\pm$ 0.0944 \\
  & Unconstrained Syn.   & 0.0293 $\pm$ 0.0282 & 0.0557 $\pm$ 0.0916 \\
\midrule
\multirow{3}{*}{24-Bus} 
  & Ground Truth      & 0.1733 $\pm$ 0.0068 & 0.0415 $\pm$ 0.0003 \\
  & Constrained Syn.     & 0.3425 $\pm$ 0.0143 & 0.3093 $\pm$ 0.0376 \\
  & Unconstrained Syn.   & 0.4522 $\pm$ 0.0292 & 0.3444 $\pm$ 0.0827 \\
\midrule
\multirow{3}{*}{118-Bus} 
  & Ground Truth      & 1.6425 $\pm$ 0.0273 & 0.5838 $\pm$ 0.0066 \\
  & Constrained Syn.     & 3.9106 $\pm$ 0.0991 & 1.5219 $\pm$ 0.0311 \\
  & Unconstrained Syn.   & 4.0596 $\pm$ 0.1117 & 1.6318 $\pm$ 0.0405 \\
\bottomrule
\end{tabular}
\end{table}
\section{Conclusion}\label{conclusion}
This paper aims to generate statistically representative and physically consistent synthetic power flow datasets. We develop a diffusion model that integrates the AC power flow constraints into the data generation process through manifold-constrained gradient guidance. Numerical experiments on IEEE benchmark systems show that the model produces synthetic datasets with high statistical similarity to real data while achieving high physical feasibility. The proposed method can serve as a practical tool for system operators to generate high-quality power flow data suitable for public release and capable of supporting a wide range of downstream ML applications.

\bibliographystyle{IEEEtran}
\bibliography{mybib.bib}
\balance

\end{document}